\pdfoutput=1

\documentclass[11pt]{article}

\usepackage[]{EMNLP2023}

\usepackage{times}
\usepackage{latexsym}
\usepackage{algorithmic}  
\usepackage{amsmath}  
\usepackage{algorithm} 
\usepackage[algo2e]{algorithm2e} 
\usepackage[utf8]{inputenc} 
\usepackage[T1]{fontenc}    
\usepackage{hyperref}       
\usepackage{url}            
\usepackage{booktabs}       
\usepackage{amsfonts}       
\usepackage{nicefrac}       
\usepackage{microtype}      
\usepackage{xcolor}         
\usepackage{makecell}
\usepackage{xcolor,colortbl}
\usepackage{pifont,dsfont}
\usepackage{wrapfig,lipsum,booktabs}
\usepackage[export]{adjustbox}
\usepackage{caption}

\usepackage[most]{tcolorbox}
\definecolor{block-gray}{gray}{0.85}
\newtcolorbox{myquote}{colframe=black,boxrule=1pt,
colback=white,grow to right by=-1mm,grow to left by=-1mm,
boxsep=0pt,breakable}

\newtcolorbox{inside_myquote}{boxrule=0pt,
colback=block-gray,grow to right by=3mm,grow to left by=3mm,
top=0pt,bottom=0pt}

\usepackage{times}
\usepackage{soul}
\usepackage{url}
\usepackage{amsmath}
\usepackage{amsthm}
\usepackage{booktabs}
\usepackage{algorithm}
\usepackage{algorithmic}
\usepackage{caption}
\usepackage{makecell} 
\usepackage{graphicx}
\usepackage{subcaption}

\usepackage{amssymb,stmaryrd}

\usepackage{xcolor,colortbl}

\newcommand{\hide}[1]{}

\newtheorem{theorem}{Theorem}

\usepackage[T1]{fontenc}

\usepackage[utf8]{inputenc}

\usepackage{microtype}

\usepackage{inconsolata}

\title{
Robust Conformal Consensus: Multi-Agent LLM-as-a-Judge Interval Evaluation with Conformal Prediction
}

\author{
  Lihui Liu \\
  Wayne State University\\
  Detroit, Michigan, USA\\
  \texttt{hw6926@wayne.edu} \\
}

\begin{document}

\maketitle

\begin{abstract}

LLM-as-a-Judge has emerged as a promising paradigm for evaluating natural language generation. However, the uncertainty associated with such evaluations remains largely unexplored, which limits their reliability in real-world applications. Although conformal prediction offers a principled framework for uncertainty quantification, existing approaches typically apply it to a single LLM judge, overlooking the variability introduced by using different LLM evaluators. In this work, we propose a robust uncertainty estimation framework for multi-agent LLM-as-a-Judge evaluation. Our approach constructs conformal prediction intervals for LLM-based scores from multiple LLMs. By considering intervals from different LLM judges, we obtain more stable and reliable uncertainty estimates. Extensive experiments demonstrate that our method produces valid prediction intervals with coverage guarantees, and that interval-based aggregation across multiple judges leads to more stable evaluation outcomes.

\vspace{-0.7\baselineskip}
\end{abstract}


\section{Introduction}

Large language models (LLMs) are increasingly being adopted as automated evaluators for natural language generation (NLG) tasks, a setting commonly known as LLM-as-a-Judge. Prior studies show that LLM-based evaluators correlate well with human assessments and achieve competitive performance on established evaluation metrics such as ROUGE~\cite{lin-2004-rouge}, BLEU~\cite{blue}, and BERTScore~\cite{BERTScore}. In addition to metric-level agreement, LLM judges provide substantial advantages in terms of flexibility and scalability: they can be readily adapted to diverse evaluation objectives and offer a cost-effective alternative to human annotation. As a result, LLM-as-a-Judge has been successfully applied across a broad range of domains, including healthcare~\cite{healthcare}, question answering~\cite{qa}, cybersecurity~\cite{safety}, and trafficking detection~\cite{traffic}.

Despite their advantages, evaluations based on a single LLM judge can be unreliable. LLM outputs are inherently random, so a single score may be biased or unstable~\cite{unstable}. This unreliability is especially concerning in high-stakes areas such as healthcare and finance ~\cite{healthcare}. Although techniques like prompt engineering or fine-tuning can reduce uncertainty, LLM judges may still produce inconsistent or overly confident evaluations.
Conformal prediction ~\cite{conformal_prediction} offers a clear way to address this issue. It produces prediction intervals that indicate a range where the true score is likely to fall, with statistical guarantees. Because it works without assumptions about the model or data and requires only a small calibration set, conformal prediction is well suited for measuring uncertainty in LLM-based evaluations.

However, existing conformal prediction methods for LLM-based evaluation are predominantly designed for a single LLM judge and are applied to the scores produced by one model in isolation. In practice, different LLMs often assign noticeably different scores to the same input due to variations in training data, architectures, alignment strategies, and inherent inductive biases. As a consequence, uncertainty estimates derived from a single LLM can be highly model-dependent and may fail to reflect the true variability of the evaluation process. This limitation can lead to misleadingly narrow prediction intervals, overconfident assessments, and poor robustness when the choice of LLM judge changes. To obtain more reliable and stable uncertainty estimates, it is therefore necessary to explicitly account for inter-LLM variability by incorporating scores from multiple LLM judges. Leveraging multiple LLMs as independent evaluators enables the construction of uncertainty estimates that better capture disagreement across judges, resulting in more robust prediction intervals and more trustworthy LLM-as-a-Judge evaluations.

In this paper, we introduce a robust interval adaptation framework that can be integrated with existing conformal prediction methods. Instead of relying on a single LLM judge, our approach explicitly incorporates multiple LLMs and recalibrates prediction intervals to account for inter-LLM variability. We apply the proposed framework to nine conformal prediction techniques to quantify uncertainty in rating-based LLM-as-a-Judge evaluations, where each method produces a prediction interval over LLM-generated scores. For each approach, we assess both {efficiency}, measured by the average interval width, and {coverage}, defined as the probability that the true rating lies within the predicted interval.
Our analysis shows that the quality of prediction intervals depends on the LLM used and the size of the calibration set. Using multiple LLMs provides more stable and reliable intervals than relying on a single model.
These findings emphasize the value of multi-LLM, uncertainty-aware evaluation for more robust assessments.

In summary, our contributions are:
\begin{itemize}
    \item We propose a {multi-LLM, robust framework} that quantifies uncertainty in LLM-as-a-judge evaluations using conformal prediction for rating-based tasks.
    \item We run experiments with multiple LLMs, showing that our method yields more stable uncertainty estimates than using a single LLM.
\end{itemize}

\section{Preliminaries}\label{preliminary}

\paragraph{LLM-as-a-Judge.}  
Large language models (LLMs) are increasingly used as automatic evaluators for natural language generation, a paradigm commonly called {LLM-as-a-judge}. In rating-based evaluations, an LLM assigns a numerical score $y_0$ to a candidate text $x$ according to a predefined scale, such as a Likert scale ~\cite{elangovan2024beyond}. Formally, given a prompt $p$, an LLM judge $M$ produces
\begin{equation}
M(p, x) = (z, y_0),
\end{equation}
where $z$ represents the token-level logits and $y_0$ is the final score. To obtain $y_0$, we focus on the logits corresponding to the tokens that encode the discrete rating options (e.g., 1–5). In addition to scalar scores, LLMs can handle more complex evaluation setups, such as pairwise comparisons or ranking tasks, by scoring multiple candidate outputs and deriving relative judgments based on these scores. This flexibility makes LLMs versatile evaluators across diverse NLG scenarios.

\paragraph{Conformal Prediction.}  
Conformal prediction is a flexible, model-agnostic technique for measuring uncertainty in predictions ~\cite{conformal_prediction}. Instead of giving a single point estimate, it produces a {prediction interval} that is statistically guaranteed to contain the true value with a pre-specified probability. Two key features make it appealing for LLM evaluation: it works {post-hoc} without modifying the model, and it does not rely on assumptions about the data distribution.  

In this paper, we use {split conformal prediction}, which estimates uncertainty using a separate calibration set. For each calibration example, we compute a {non-conformity score} that quantifies how far the model’s prediction $\hat{y}$ is from the true value $y$:
\begin{equation}
s(z, y) = |\hat{y} - y|,
\end{equation}
where $\hat{y} = f(z)$ is the predicted score derived from model logits $z$.  

Given a desired error rate $\alpha$, we take the $\lceil (n+1)(1-\alpha)\rceil$-th largest non-conformity score $\hat{q}$ from the calibration set to define the prediction interval for a new test input $z_\text{test}$:
\begin{equation}
C(z_\text{test}, \hat{y}_\text{test}) = [\hat{y}_\text{test} - \hat{q}, \hat{y}_\text{test} + \hat{q}].
\end{equation}
This interval satisfies a formal coverage guarantee, meaning that the true score $y_\text{test}$ will lie inside it at least $1-\alpha$ of the time, assuming calibration and test points are exchangeable ~\cite{conformal_prediction}:
\begin{equation}
1-\alpha \le \mathbb{P}\big(y_\text{test} \in C(z_\text{test}, \hat{y}_\text{test})\big) \le 1-\alpha + \frac{1}{n+1}.
\end{equation}
By constructing such intervals, conformal prediction allows us to quantify the uncertainty of LLM-generated ratings in a principled and interpretable way.


\section{Methodology}\label{single}

\begin{figure*}[h!]
    \centering
    \includegraphics[width=0.9\textwidth]{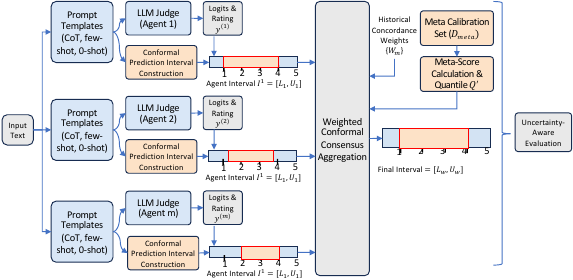} 
    \caption{Overview of uncertainty quantification in rating-based multi-agent LLM-as-a-Judge evaluation. We apply conformal prediction to multiple LLM judges to construct prediction intervals for their ratings. These intervals are then aggregated, and the width of the resulting interval is used as a measure of evaluation uncertainty.}
    \label{example} 
\end{figure*}

Existing conformal prediction approaches for LLM-based evaluation typically apply uncertainty quantification to a single judging agent, treating the LLM’s output as fixed given one prompt and one response. This setting overlooks the well-documented sensitivity of LLM judgments to prompt wording, instruction style, and sampling randomness, effectively collapsing diverse judging behaviors into a single agent view. For example, when the same generated text is evaluated by two judging agents instantiated through semantically equivalent Likert-scale prompts, such as “Rate the overall quality on a scale from 1 to 5” or “How good is this response? (1 = very poor, 5 = excellent)”, different LLMs may assign different ratings (e.g., 3 vs. 4), despite evaluating identical content. Applying conformal prediction to only one such agent produces a prediction interval that reflects agent-specific variability rather than intrinsic evaluation uncertainty, leading to unstable and potentially misleading uncertainty estimates.

To solve this problem, in this paper, we study uncertainty quantification for multi-agent LLM-as-a-judge in discrete rating-based evaluations (e.g., Likert scales) using conformal prediction. Instead of relying on a single judging agent, we instantiate multiple agents through semantically equivalent prompts. Each agent independently produces a judgment, which is then transformed into a conformal prediction interval. These agent-level intervals are subsequently aggregated to yield a final, uncertainty-aware evaluation that captures model uncertainty. An overview of the workflow is shown in Figure ~\ref{example}.

\subsection{LLM Rating Extraction and Conformal Prediction}

We focus on evaluation tasks where ratings are discrete, such as Likert scales (1--5). For each input text, we collect responses from $m$ different LLM judges, where each judge independently provides a single-turn rating. This setup captures variability across models, which is important since different LLMs may assign different scores to the same text.  

From each LLM response, we extract the token logits corresponding to candidate rating labels. In practice, outputs may not strictly follow a fixed format, so we adopt a simple rule-based strategy: identify the rating token at its most frequent position across responses from the same LLM. Once the position is located, we extract the log probabilities for all possible labels and merge tokens with equivalent meanings (e.g., ``two'' and ``2'') to maintain semantic consistency.  

This process produces, for each LLM $j$, a $K$-dimensional feature vector:
\[
\mathbf{z}^{(j)} \in \mathbb{R}^{K},
\]
representing logits for each rating label. Combined with the observed rating $y^{(j)}$, each pair $(\mathbf{z}^{(j)}, y^{(j)})$ is used as input to conformal prediction. We assume that pairs from different LLMs are exchangeable, allowing uncertainty to be estimated across the ensemble of judges.  

\paragraph{Constructing Prediction Intervals.}  
For each LLM, we generate a prediction interval for its rating. Intervals are preferred over sets because they naturally respect the ordinal structure of Likert ratings. To accommodate different modeling strategies, we apply a variety of conformal methods.  

For regression-based approaches, where logits often exhibit heteroscedasticity, we use Conformalized Quantile Regression (CQR) ~\cite{CQR} and its variants, including asymmetric CQR, CHR ~\cite{CHR}, LVD ~\cite{LVD}, Boosted CQR~\cite{boost}, Boosted LCP~\cite{boost}, and R2CCP~\cite{r2cpp}. For ordinal classification, ratings are treated as ordered categories, and we apply ordinal conformal methods such as Ordinal APS~\cite{aps} and Ordinal Risk Control~\cite{ordinal_risk_control}, using softmax probabilities as class likelihoods. Raw logits are retained for regression-based methods. Each LLM $j$ thus produces a conformal prediction interval $\mathcal{I}^{(j)}_{\alpha}$ with coverage $1-\alpha$.  

\paragraph{Boundary Adjustment.}  
In rating-based evaluations, the labels are discrete numbers (for example, 1 to 5), but the prediction intervals produced by conformal prediction are often continuous values. Continuous boundaries can be confusing or hard to interpret, following ~\cite{Jian}, we convert the interval $C(\mathbf{z}_{\text{test}}) = [l, u]$ into a range of valid integer ratings:
\[
[l, u] \;\longrightarrow\; [l', u'], \quad l' = \lceil l \rceil, \, u' = \lfloor u \rfloor.
\]
This ensures that the interval only contains valid ratings and removes parts that do not correspond to any possible label.  

To make the intervals more robust and reduce the risk of excluding the true rating, we can optionally expand the interval to include neighboring labels. For instance, if the original continuous interval is $[2.2, 3.9]$, it would initially cover only label 3, but expanding it to $[2, 4]$ gives a safer range. This adjustment preserves the order of the ratings and provides a more reliable measure of uncertainty.  

\subsection{Aggregating Intervals Across Multiple Judges}

After obtaining the individual conformal intervals from each LLM judge, we need a way to combine them into a single, unified uncertainty estimate. Each LLM can be treated as an independent “expert”, producing a calibrated prediction interval 
\[
\mathcal{I}^{(m)}_{\alpha} = [L_m, U_m]
\] 
with nominal coverage $1-\alpha$. A straightforward approach is to take the union or intersection of these intervals. However, these simple strategies have drawbacks: taking the union often results in overly wide intervals that are too conservative, while taking the intersection can produce intervals that are too narrow, risking missed true ratings—particularly when some LLMs are overconfident or generate occasional outlier predictions.

To overcome these issues, we propose a {Weighted Conformal Consensus (WCC)} method. WCC assigns weights to each LLM judge based on its past reliability and calibration performance. More trustworthy agents have a larger influence on the merged interval, while less reliable agents contribute less. Using this weighted approach, we construct a consensus interval that balances coverage and efficiency, producing a more robust and informative uncertainty estimate. This multi-LLM aggregation allows users to capture both the agreement and variability among different models, leading to more reliable and interpretable evaluation of text quality.

\subsubsection{Weighted Consensus Aggregation}
Instead of relying solely on the median, which ignores subtle differences in agent reliability, we first calculate a confidence weight $w_m$ for each LLM agent $m$ based on its {historical concordance} with the collective. This weight quantifies the agent's trustworthiness relative to its peers.

The preliminary consensus interval $C_{\text{weighted}} = [L_{\text{weighted}}, U_{\text{weighted}}]$ is then derived via a {weighted average} of the individual interval endpoints:

\[
\hat{L}_{\text{weighted}} = \frac{\sum_{m=1}^M w_m \cdot \hat{L}_m}{\sum w_m}
\]
\[
\hat{U}_{\text{weighted}} = \frac{\sum_{m=1}^M w_m \cdot \hat{U}_m}{\sum w_m}
\]

This aggregation step is crucial: by down-weighting agents that frequently produce outlier or overconfident intervals, we mitigate the negative impact of "hallucinating" agents and enhance the {efficiency} (narrowness) of the consensus.

\subsection{Meta-Calibration for Coverage Guarantee}
While the weighted average provides an improved, more compact preliminary interval, it inherently {lacks the formal statistical coverage guarantee}. To restore this validity, we introduce a meta-calibration step.

On a small, independent meta-calibration set $\mathcal{D}_{\text{meta}}$, we compute {meta-nonconformity scores} $S'_{\text{meta}, i}$ that measure how far the true label $Y_i$ deviates from our weighted consensus interval $[\hat{L}_{\text{weighted}, i}, \hat{U}_{\text{weighted}, i}]$:

\[
S'_{\text{meta}, i} = \max\big(\hat{L}_{\text{weighted}, i} - Y_i, \; Y_i - \hat{U}_{\text{weighted}, i}\big).
\]

Finally, we compute the $(1-\alpha)$ empirical quantile of the meta-scores, denoted $Q'_{\text{meta}}$, and construct the final merged interval:

\[
C_{\text{final}} = [\hat{L}_{\text{weighted}} - Q'_{\text{meta}}, \; \hat{U}_{\text{weighted}} + Q'_{\text{meta}}].
\]

\subsection{Coverage Guarantee for WCC}
\label{subsec:wcc-coverage}

In this subsection, we provide a theoretical analysis showing that our proposed method satisfies the desired coverage guarantee, ensuring that the constructed prediction intervals contain the true rating with probability at least $1-\alpha$.

Let $\mathcal{Z} = \mathcal{X} \times \mathcal{Y}$ denote the data space, where $\mathcal{X}$ is the input space and $\mathcal{Y}$ the label space.  
Let
\[
D_{\mathrm{meta}} = \{(X_i, Y_i)\}_{i=1}^{n}
\]
be a held-out \emph{meta-calibration} set used exclusively to compute meta-level nonconformity scores for calibrating the aggregated LLM consensus interval.  
Let $(X_{n+1}, Y_{n+1})$ denote a test point.

\begin{theorem}[Marginal Coverage Guarantee of WCC]
\label{thm:wcc-coverage}
Suppose $M$ LLMs produce prediction intervals
\[
\hat{C}_m(X) = [\hat{L}_m(X), \hat{U}_m(X)], \quad m = 1, \dots, M,
\]
for an input $X$, with fixed, pre-computed weights $\{w_m\}_{m=1}^M$ based on historical concordance.
Define the weighted consensus interval as
\[
[\hat{L}_{\mathrm{w}}(X), \hat{U}_{\mathrm{w}}(X)]
= \mathcal{A}_{\mathrm{w}}(\hat{C}_1, \dots, \hat{C}_M),
\]
where
\[
\hat{L}_{\mathrm{w}}(X) = \frac{\sum_{m=1}^M w_m \hat{L}_m(X)}{\sum_{m=1}^M w_m},
\]
\[
\hat{U}_{\mathrm{w}}(X) = \frac{\sum_{m=1}^M w_m \hat{U}_m(X)}{\sum_{m=1}^M w_m}.
\]

Let $Q'_{\mathrm{meta}}$ be the $(1-\alpha)$ empirical quantile of the meta-nonconformity scores computed on $D_{\mathrm{meta}}$.  
Then the final interval
\[
C_{\mathrm{final}}(X_{n+1}) =
\begin{aligned}
\Big[
&\hat{L}_{\mathrm{w}}(X_{n+1}) - Q'_{\mathrm{meta}}, \\
&\hat{U}_{\mathrm{w}}(X_{n+1}) + Q'_{\mathrm{meta}} \Big]
\end{aligned}
\]
satisfies the marginal coverage guarantee
\[
\mathbb{P}\big( Y_{n+1} \in C_{\mathrm{final}}(X_{n+1}) \big) \ge 1 - \alpha.
\]
\end{theorem}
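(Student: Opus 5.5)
The plan is to reduce the statement to the standard split conformal guarantee recalled in Section~\ref{preliminary}, applied at the meta level. The aggregated interval is a fixed, deterministic function of $X$. This is the key structural point. The individual intervals $\hat{C}_m$ are produced by procedures fitted and calibrated on data disjoint from $D_{\mathrm{meta}}\cup\{(X_{n+1},Y_{n+1})\}$, and the weights $\{w_m\}$ are fixed in advance. Therefore, conditional on that earlier data, the maps $X\mapsto \hat{L}_{\mathrm{w}}(X)$ and $X\mapsto \hat{U}_{\mathrm{w}}(X)$ are fixed measurable functions. We will make this conditioning explicit, and assume, as the statement implicitly does, that the $n+1$ points $(X_1,Y_1),\dots,(X_{n+1},Y_{n+1})$ are exchangeable.

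Next, define the score function $s(x,y)=\max\big(\hat{L}_{\mathrm{w}}(x)-y,\; y-\hat{U}_{\mathrm{w}}(x)\big)$ and set $S_i=s(X_i,Y_i)$ for $i=1,\dots,n+1$. Because $s$ is fixed and applied coordinatewise, exchangeability of the data transfers to exchangeability of $(S_1,\dots,S_{n+1})$.

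We then verify the event equivalence
\[
Y_{n+1}\in C_{\mathrm{final}}(X_{n+1}) \iff S_{n+1}\le Q'_{\mathrm{meta}}.
\]
This follows because $S_{n+1}\le q$ holds exactly when both $\hat{L}_{\mathrm{w}}(X_{n+1})-Y_{n+1}\le q$ and $Y_{n+1}-\hat{U}_{\mathrm{w}}(X_{n+1})\le q$, which together say $Y_{n+1}\in[\hat{L}_{\mathrm{w}}-q,\hat{U}_{\mathrm{w}}+q]$. Note that $S_{n+1}$ may be negative when $Y_{n+1}$ lies strictly inside the interval, and the equivalence still holds.

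The quantile step needs care, because it is where the result could fail as stated. We must interpret $Q'_{\mathrm{meta}}$ as the finite-sample-corrected quantile, namely the $\lceil (n+1)(1-\alpha)\rceil$-th smallest of $S_1,\dots,S_n$, with the convention $Q'_{\mathrm{meta}}=+\infty$ if that index exceeds $n$. This matches the convention in Section~\ref{preliminary}; we will state it explicitly, since the plain empirical $(1-\alpha)$ quantile only gives coverage $1-\alpha-O(1/n)$.

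With this convention, the standard rank argument finishes the proof. By exchangeability, the rank of $S_{n+1}$ among $S_1,\dots,S_{n+1}$, with ties broken uniformly at random, is uniform on $\{1,\dots,n+1\}$. Moreover, $S_{n+1}\le S_{(k)}$ (the $k$-th smallest of the calibration scores) holds whenever that rank is at most $k$. Hence the probability is at least $k/(n+1)\ge 1-\alpha$ for $k=\lceil (n+1)(1-\alpha)\rceil$. Ties only help, because the inequality is non-strict. Finally, we integrate over the conditioning data to obtain the marginal statement. Beyond the quantile convention, the only genuine obstacle is justifying that the consensus score is independent of the meta and test data. If the weights were instead estimated from $D_{\mathrm{meta}}$ itself, exchangeability would break, so the proof depends on the "fixed, pre-computed" hypothesis.
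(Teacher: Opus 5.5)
Your proposal is correct and follows the paper's proof essentially step for step. Both treat $\mathcal{A}_{\mathrm{w}}$ as a fixed predictor, use the score $\max(\hat{L}_{\mathrm{w}}-Y,\,Y-\hat{U}_{\mathrm{w}})$, take $Q'_{\mathrm{meta}}$ as the $\lceil (n+1)(1-\alpha)\rceil$-th smallest of $\{S'_1,\dots,S'_n,+\infty\}$ (the paper's proof adopts this same convention), and conclude via the event equivalence; the only differences are that you prove the quantile lemma by the rank argument rather than citing it, and you make explicit the conditioning on the per-judge calibration data that the paper leaves implicit.
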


\begin{proof}
The proof follows the standard split conformal prediction framework, treating the weighted aggregation operator $\mathcal{A}_{\mathrm{w}}$ as a fixed and deterministic predictor.

\paragraph{Meta-nonconformity score.}
Define the meta-level nonconformity score
\[
S'(X, Y)
=
\max\!\left(
\hat{L}_{\mathrm{w}}(X) - Y,
\;
Y - \hat{U}_{\mathrm{w}}(X)
\right),
\]
which measures the deviation of the true label $Y$ from the weighted consensus interval.

\paragraph{Exchangeability.}
Assume the data are i.i.d. Then the meta-calibration samples
$(X_1, Y_1), \dots, (X_n, Y_n)$
and the test point $(X_{n+1}, Y_{n+1})$ are exchangeable.
Consequently, the scores
\[
S'_1, \dots, S'_n, S'_{n+1},
\quad \text{where } S'_i = S'(X_i, Y_i),
\]
are exchangeable random variables.

\paragraph{Quantile guarantee.}
Let $Q'_{\mathrm{meta}}$ be the
$\lceil (n+1)(1-\alpha) \rceil$-th smallest value of the multiset
$\{S'_1, \dots, S'_n, +\infty\}$.
By the conformal prediction quantile lemma
\citep{CQR},
\[
\mathbb{P}\big( S'_{n+1} \le Q'_{\mathrm{meta}} \big) \ge 1 - \alpha.
\]

The event $S'_{n+1} \le Q'_{\mathrm{meta}}$ is equivalent to
\[
\max\!\left(
\hat{L}_{\mathrm{w}}(X_{n+1}) - Y_{n+1},
\;
Y_{n+1} - \hat{U}_{\mathrm{w}}(X_{n+1})
\right)
\le Q'_{\mathrm{meta}}.
\]
This holds if and only if
\[
\hat{L}_{\mathrm{w}}(X_{n+1}) - Q'_{\mathrm{meta}}
\le Y_{n+1}
\le
\hat{U}_{\mathrm{w}}(X_{n+1}) + Q'_{\mathrm{meta}}.
\]

Therefore, $\mathbb{P}\big( Y_{n+1} \in C_{\mathrm{final}}(X_{n+1}) \big) \ge 1 - \alpha$, 
which completes the proof.
\end{proof}

\section{Experiments}

\paragraph{Datasets.}
We evaluate our approach on benchmarks spanning text summarization, dialogue summarization, and reasoning. For summarization, we use \textbf{SummEval} ~\cite{SummEval} and \textbf{DialSumm} ~\cite{DialSumm}. Each instance is annotated by three human raters using Likert-scale scores along four dimensions—consistency, coherence, fluency, and relevance. Following standard practice, we use the average of the three ratings as the ground-truth label on a GPA-style scale. For reasoning, we adopt the overall-quality Likert annotations provided in \textbf{ROSCOE} ~\cite{golovneva2023roscoesuitemetricsscoring}, covering \textbf{CosmosQA} ~\cite{huang2019cosmosqamachinereading}, \textbf{DROP} ~\cite{dua-etal-2019-drop}, \textbf{e-SNLI} ~\cite{e_SNLI}, and \textbf{GSM8K} ~\cite{zeng2024mrgsm8kmetareasoningbenchmarklarge}, each containing approximately 200 annotated examples.

\paragraph{Multi-Judge LLM Evaluation.}
All evaluations are conducted using multiple LLM judges rather than a single evaluator. We primarily adopt \textbf{GEval} ~\cite{liu-etal-2023-g} with chain-of-thought (CoT) prompting, and additionally include \textbf{SocREval} ~\cite{he-etal-2024-socreval} for reasoning tasks. The full set of prompts is provided in Appendix. Our judge pool consists of {GPT-4o mini (2024-07-18)}, {DeepSeek-R1-Distill-Qwen-32B} ~\cite{deepseekai2025deepseekr1incentivizingreasoningcapability}, and {Qwen2.5-72B-Instruct} ~\cite{qwen2025qwen25technicalreport}. These models are selected because they expose token-level logits required for conformal calibration and represent diverse model families and training regimes. 

\paragraph{Data Splits and Meta-Calibration.}
To support both judge-level calibration and robust multi-judge aggregation, we adopt a three-way split for each dataset. Specifically, we allocate 40\% of the data to a per-judge calibration set, used independently by each LLM judge to construct conformal prediction intervals; 10\% to a held-out meta-calibration set $D_{\mathrm{meta}}$, used exclusively to compute meta non-conformity scores for aggregating intervals across judges; and the remaining 50\% to a test set. The meta-calibration set is never used to train, fine-tune, or calibrate any individual LLM judge.

\paragraph{Conformal Prediction Methods.}
We apply conformal prediction independently to each judge and then aggregate the resulting intervals using our Robust Conformal Consensus framework. We compare seven regression-based methods—CQR ~\cite{CQR}, Asymmetric CQR ~\cite{CQR}, CHR ~\cite{CHR}, LVD ~\cite{LVD}, Boosted CQR, Boosted LCP ~\cite{boost}, and R2CCP—as well as two ordinal classification-based methods, Ordinal APS ~\cite{aps} and Ordinal Risk Control ~\cite{ordinal_risk_control}. Ordinal methods are evaluated after applying the boundary adjustment.

\paragraph{Evaluation Protocol.}
All experiments are repeated over 30 random seeds (1--30). For each seed, we resample the three-way data split, perform judge-level conformal calibration, meta-calibration on $D_{\mathrm{meta}}$, and RCC-based aggregation, and then evaluate on the test set. We report the average empirical coverage and mean interval width of the final aggregated prediction intervals; for summarization tasks, results are also reported per evaluation dimension.

\begin{table*}[t]
\centering
\caption{Interval width and coverage on SummEval evaluated by G-Eval and ROSCOE evaluated by SocREval before boundary adjustment. We mark coverage $< 85\%$ with gray text, coverage between $85\% - 90\%$ with underline and coverage with the smallest interval width $\geq 90\%$ in bold.}
\label{tab:interval_coverage}
\resizebox{\textwidth}{!}{%
\begin{tabular}{lcccccccc}
\toprule
Method & \multicolumn{4}{c}{SummEval Evaluated with G-Eval} & \multicolumn{4}{c}{ROSCOE Evaluated with SocREval} \\
\cmidrule(lr){2-5} \cmidrule(lr){6-9}
& Consistency & Coherence & Fluency & Relevance & CosmosQA & DROP & e-SNLI & GSM8K \\
\midrule
\multicolumn{9}{c}{\textbf{GPT-4o mini}} \\ \hline
CQR & 1.15 / 94.16\% & 2.87 / 93.15\% & 1.44 / 92.92\% & 2.09 / 90.92\% & 3.53 / 95.27\% & 3.82 / 96.70\% & 3.04 / 96.62\% & 3.53 / 95.67\% \\
Asym CQR & 1.25 / 94.97\% & 2.91 / 93.76\% & 1.60 / 93.75\% & 2.13 / 91.42\% & 3.90 / 98.71\% & 3.91 / 98.60\% & 2.87 / 96.67\% & 3.89 / 98.80\% \\
CHR & 0.67 / \underline{88.99\%} & 2.41 / \textcolor{gray}{82.96\%} & 0.94 / \underline{88.86\%} & 1.74 / \textcolor{gray}{82.62\%} & 2.54 / \textcolor{gray}{73.06\%} & 1.86 / \textcolor{gray}{68.92\%} & 1.36 / \textcolor{gray}{72.24\%} & 1.98 / \textcolor{gray}{78.67\%} \\
LVD & 1.01 / 92.35\% & 2.73 / \underline{89.76\%} & \textbf{1.11 / 90.59\%} & 2.02 / \underline{89.55\%} & 3.10 / \textcolor{gray}{83.95\%} & 2.49 / \textcolor{gray}{83.05\%} & 2.17 / \underline{86.18\%} & 3.08 / \underline{89.57\%} \\
Boosted CQR & 1.01 / \underline{87.75\%} & 2.73 / \underline{87.80\%} & 1.54 / \underline{88.68\%} & 2.00 / \underline{87.42\%} & 3.15 / \textcolor{gray}{80.07\%} & 2.63 / \textcolor{gray}{78.57\%} & 1.82 / \textcolor{gray}{80.26\%} & 3.08 / \textcolor{gray}{82.50\%} \\
Boosted LCP & 0.76 / \underline{89.22\%} & 2.67 / \underline{87.34\%} & 0.92 / \underline{89.18\%} & 1.91 / \underline{87.19\%} & 3.60 / \textcolor{gray}{83.91\%} & 2.92 / \underline{85.40\%} & 1.88 / \textcolor{gray}{81.23\%} & 3.36 / \underline{85.93\%} \\
R2CCP & \textbf{0.69 / 90.88\%} & 2.62 / \underline{89.63\%} & 0.92 / \underline{89.36\%} & 1.97 / \underline{89.70\%} & 2.96 / \underline{85.85\%} & 2.43 / \textcolor{gray}{84.73\%} & 1.75 / \textcolor{gray}{84.02\%} & 2.15 / \underline{85.07\%} \\
\midrule
\multicolumn{9}{c}{\textbf{DeepSeek-R1-Distill-Qwen-32B}} \\ \hline
CQR & 1.16 / 93.88\% & 2.67 / 92.50\% & 1.31 / 93.01\% & 2.13 / 91.05\% & 3.48 / 96.70\% & 3.83 / 96.35\% & 2.97 / 96.36\% & 3.46 / 95.60\% \\
Asym CQR & 1.30 / 95.13\% & 2.72 / 92.86\% & 1.49 / 94.52\% & 2.21 / 92.06\% & 3.84 / 99.08\% & 3.95 / 99.27\% & 2.86 / 96.05\% & 3.85 / 98.43\% \\
CHR & \textbf{0.82 / 91.17\%} & 2.23 / \underline{87.07\%} & 0.90 / \underline{89.24\%} & 1.87 / \underline{86.38\%} & 2.66 / \textcolor{gray}{76.50\%} & 1.95 / \textcolor{gray}{78.06\%} & 1.38 / \textcolor{gray}{71.97\%} & 2.01 / \textcolor{gray}{81.60\%} \\
LVD & 0.97 / 92.93\% & \textbf{2.43 / 91.10\%} & \textbf{1.00 / 91.10\%} & \textbf{2.04 / 90.14\%} & 3.25 / \underline{88.10\%} & 2.62 / \underline{88.06\%} & \textbf{2.24 / 90.96\%} & \textbf{3.02 / 90.63\%} \\
Boosted CQR & 1.10 / \underline{89.30\%} & 2.36 / \underline{88.98\%} & 1.16 / \underline{89.46\%} & 2.00 / \underline{88.98\%} & 3.17 / \textcolor{gray}{82.72\%} & 2.47 / \textcolor{gray}{81.11\%} & 1.79 / \textcolor{gray}{80.96\%} & 2.94 / \textcolor{gray}{79.83\%} \\
Boosted LCP & 0.77 / \underline{89.20\%} & 2.32 / \underline{86.70\%} & 0.93 / \underline{89.10\%} & 1.91 / \underline{86.89\%} & 3.48 / \textcolor{gray}{81.60\%} & 2.79 / \underline{85.46\%} & 1.84 / \textcolor{gray}{80.61\%} & 3.43 / \underline{85.23\%} \\
R2CCP & \textbf{0.69 / 90.44\%} & \textbf{2.30 / 90.12\%} & \textbf{0.89 / 90.09\%} & 2.00 / \underline{89.84\%} & 2.94 / \underline{86.97\%} & 2.29 / \underline{86.35\%} & 1.85 / \underline{87.87\%} & 1.88 / \underline{85.33\%} \\
\midrule
\multicolumn{9}{c}{\textbf{Qwen2.5-72B-Instruct}} \\ \hline
CQR & 0.98 / 93.10\% & 2.73 / 92.25\% & 1.44 / 93.73\% & 2.11 / 91.30\% & 3.37 / 94.80\% & 3.79 / 97.02\% & 3.01 / 97.37\% & 3.35 / 95.33\% \\
Asym CQR & 1.11 / 94.47\% & 2.80 / 93.13\% & 1.63 / 94.79\% & 2.17 / 92.21\% & 3.86 / 99.01\% & 3.89 / 98.67\% & 2.77 / 96.84\% & 3.87 / 98.97\% \\
CHR & 0.61 / \underline{89.04\%} & 2.14 / \textcolor{gray}{80.93\%} & 0.98 / \underline{88.93\%} & 1.61 / \textcolor{gray}{79.61\%} & 2.44 / \textcolor{gray}{72.65\%} & 2.08 / \textcolor{gray}{75.87\%} & 1.22 / \textcolor{gray}{69.69\%} & 1.81 / \textcolor{gray}{77.50\%} \\
LVD & 0.85 / 92.82\% & \textbf{2.55 / 90.49\%} & \textbf{1.09 / 90.94\%} & 1.94 / \underline{89.27\%} & 3.05 / \textcolor{gray}{84.29\%} & \textbf{2.67 / 90.57\%} & 1.91 / \underline{85.96\%} & \textbf{2.83 / 90.13\%} \\
Boosted CQR & 0.80 / \underline{88.28\%} & 2.46 / \underline{87.82\%} & 1.24 / \underline{89.22\%} & 1.88 / \underline{87.17\%} & 3.05 / \textcolor{gray}{79.08\%} & 2.56 / \textcolor{gray}{81.17\%} & 1.51 / \textcolor{gray}{77.11\%} & 2.81 / \textcolor{gray}{80.67\%} \\
Boosted LCP & 0.67 / \underline{88.81\%} & 2.43 / \underline{86.92\%} & 0.94 / \underline{89.26\%} & 1.86 / \underline{87.51\%} & 3.46 / \textcolor{gray}{80.41\%} & 2.81 / \underline{85.75\%} & 1.74 / \textcolor{gray}{77.50\%} & 3.38 / \underline{86.23\%} \\
R2CCP & \textbf{0.61 / 90.73\%} & 2.44 / \underline{89.54\%} & \textbf{0.95 / 90.18\%} & \textbf{1.98 / 90.45\%} & 2.90 / \underline{85.34\%} & 2.39 / \underline{86.25\%} & 1.59 / \textcolor{gray}{84.50\%} & 2.00 / \underline{86.73\%} \\
\midrule
\multicolumn{9}{c}{\textbf{Multi-Agent LLM-as-a-Judge}} \\ \hline
CQR & 1.09 / 93.7\% & 2.75 / 92.6\% & 1.40 / 93.3\% & 2.11 / 91.1\% & 3.46 / 95.5\% & 3.81 / 96.7\% & 3.00 / 96.7\% & 3.42 / 95.5\% \\
Asym CQR & 1.22 / 94.8\% & 2.80 / 93.2\% & 1.57 / 94.4\% & 2.17 / 91.9\% & 3.87 / 98.9\% & 3.92 / 98.9\% & 2.84 / 96.5\% & 3.86 / 98.7\% \\
CHR & 0.70 / \underline{89.7\%} & 2.24 / \textcolor{gray}{83.5\%} & 0.94 / \underline{89.0\%} & 1.74 / \textcolor{gray}{82.9\%} & 2.54 / \textcolor{gray}{74.0\%} & 1.98 / \textcolor{gray}{75.3\%} & 1.33 / \textcolor{gray}{71.4\%} & 1.92 / \textcolor{gray}{79.4\%} \\
LVD & 0.94 / 92.7\% & \textbf{2.56 / 90.5\%} & \textbf{1.06 / 90.9\%} & 2.00 / \underline{89.7\%} & 3.13 / \underline{85.4\%} & 2.61 / \underline{88.1\%} & 2.13 / \underline{88.0\%} & \textbf{2.95 / 90.3\%} \\
Boosted CQR & 0.96 / \underline{88.4\%} & 2.50 / \underline{88.2\%} & 1.29 / \underline{89.2\%} & 1.96 / \underline{87.9\%} & 3.12 / \textcolor{gray}{80.5\%} & 2.54 / \textcolor{gray}{80.6\%} & 1.72 / \textcolor{gray}{79.7\%} & 2.91 / \textcolor{gray}{80.6\%} \\
Boosted LCP & 0.73 / \underline{89.1\%} & 2.46 / \underline{87.0\%} & 0.93 / \underline{89.2\%} & 1.89 / \underline{87.2\%} & 3.51 / \textcolor{gray}{82.0\%} & 2.82 / \underline{85.6\%} & 1.83 / \textcolor{gray}{80.0\%} & 3.40 / \underline{85.8\%} \\
R2CCP & \textbf{0.66 / 90.7\%} & 2.44 / \underline{89.8\%} & \textbf{0.92 / 90.0\%} & \textbf{1.98 / 90.0\%} & 2.93 / \underline{86.0\%} & 2.36 / \underline{86.0\%} & 1.75 / \underline{85.7\%} & 1.97 / \underline{85.9\%} \\
\bottomrule
\end{tabular}%
}
\end{table*}

\subsection{Continuous Intervals Indicate Uncertainty}
Table~\ref{tab:interval_coverage} reports the interval width and empirical coverage for different conformal prediction methods on SummEval evaluated with G-Eval and on ROSCOE evaluated with SocREval. The results show a clear trade-off between interval width and coverage. Methods like CQR and Asym CQR consistently achieve the target coverage (at or near 90\%) across nearly all evaluation dimensions and LLM judges, but they produce the widest prediction intervals. In contrast, methods such as CHR and Boosted CQR often yield narrower intervals but frequently fail to reach the 90\% coverage target, with many results falling below 85\%, as indicated by the gray text.

For the SummEval benchmark, most methods achieve coverage close to or above 90\% for the GPT-4o mini and Multi-Agent judge, particularly for Fluency and Consistency. However, for other judges like Qwen2.5, methods like CHR show coverage dips below 85\% for Coherence and Relevance. On the more challenging ROSCOE reasoning tasks, achieving the target coverage is more difficult for most methods. While CQR and Asym CQR again maintain high coverage, they do so with very wide intervals (e.g., often >3.8). Other methods frequently result in coverage below 85\%, especially for the CosmosQA, DROP, and e-SNLI datasets. The LVD and R2CCP methods often present a middle ground, achieving coverage between 85\%-90\% (underlined) with moderately sized intervals, and in several cases (shown in bold) they achieve the smallest interval width while still maintaining coverage of at least 90\%.

The Multi-Agent judge demonstrates greater robustness by 
consistently avoiding the lowest coverage failures seen in single models. For example, with the CHR method on ROSCOE/CosmosQA, it achieves 74.0\% coverage where single models like Qwen2.5 fall to 72.65\%. 
This indicates that combining multiple LLMs yields more stable and reliable uncertainty estimates across different tasks.

\subsection{All Coverages Improve after Adjustment}
Table~\ref{tab:interval_boundary_adjusted} presents the interval width and coverage after boundary adjustment for SummEval under G-Eval and for ROSCOE under SocREval. We observe a consistent improvement in coverage across all settings after applying boundary adjustment. In most cases, the adjusted prediction intervals achieve coverage close to or exceeding the desired 90\% level across datasets, methods, and LLM judges. 
The gains are especially notable for reasoning tasks, where coverage was often below 90\% before adjustment. For instance, LVD on e-SNLI evaluated with Qwen2.5-72B-Instruct under SocREval increases from 85.96\% to 95.53\%. Multi-LLM ensembles further stabilize uncertainty estimates: aggregating intervals from multiple judges generally preserves or improves coverage while maintaining relatively small interval widths. Overall, boundary adjustment combined with multi-LLM aggregation produces more stable uncertainty quantification across diverse tasks. 


\begin{table*}[t]
\centering
\caption{Interval width and coverage on SummEval evaluated by G-Eval and ROSCOE evaluated by SocREval after boundary adjustment. We mark coverage $< 85\%$ with \textcolor{gray}{gray text}, coverage between $85\% – 90\%$ with \underline{underline} and coverage $\ge 90\%$ with the smallest interval width in \textbf{bold}.}
\label{tab:interval_boundary_adjusted}
\resizebox{\textwidth}{!}{%
\begin{tabular}{lcccccccc}
\toprule
Method & \multicolumn{4}{c}{SummEval Evaluated with G-Eval} & \multicolumn{4}{c}{ROSCOE Evaluated with SocREval} \\
\cmidrule(lr){2-5} \cmidrule(lr){6-9}
& Consistency & Coherence & Fluency & Relevance & CosmosQA & DROP & e-SNLI & GSM8K \\
\midrule
\multicolumn{9}{c}{\textbf{GPT-4o mini}} \\ \hline
CQR & 1.15 / 95.45\% & 2.87 / 94.94\% & 1.44 / 93.80\% & 2.09 / 93.56\% & 3.53 / 95.34\% & 3.82 / 97.05\% & 3.04 / 96.89\% & 3.53 / 95.67\% \\
Asym CQR & 1.25 / 96.02\% & 2.90 / 95.41\% & 1.60 / 94.57\% & 2.14 / 94.14\% & 3.90 / 98.84\% & 3.91 / 98.73\% & 2.87 / 96.89\% & 3.89 / 98.80\% \\
CHR & 0.70 / 91.79\% & 2.41 / \underline{87.78\%} & 0.94 / 90.60\% & 1.74 / \underline{88.10\%} & 2.56 / \textcolor{gray}{82.45\%} & 1.87 / \textcolor{gray}{78.86\%} & 1.34 / \textcolor{gray}{83.46\%} & 1.94 / \textcolor{gray}{83.23\%} \\
LVD & 1.01 / 94.11\% & 2.73 / 93.72\% & 1.12 / 92.70\% & 2.03 / 93.82\% & 3.13 / 91.53\% & 2.52 / 90.22\% & 2.17 / 94.82\% & 3.09 / 93.37\% \\
Boosted CQR & 0.99 / 92.81\% & 2.73 / 93.02\% & 1.54 / 94.38\% & 2.00 / 92.93\% & 3.20 / 93.40\% & 2.63 / \underline{89.65\%} & 1.82 / 92.15\% & 3.09 / 91.17\% \\
Boosted LCP & 0.74 / 91.90\% & 2.68 / 93.53\% & 0.90 / 90.88\% & \textbf{1.91 / 92.70\%} & 3.60 / 95.48\% & 3.01 / 91.27\% & 1.90 / 91.80\% & 3.26 / 92.17\% \\
R2CCP & \textbf{0.68 / 92.15\%} & \textbf{2.62 / 92.81\%} & \textbf{0.91 / 90.99\%} & 1.97 / 93.38\% & 2.93 / \underline{89.46\%} & 2.41 / \underline{89.21\%} & \textbf{1.71 / 90.11\%} & 2.09 / \underline{86.93\%} \\
OrdinalAPS & 2.28 / \textcolor{gray}{71.48\%} & 1.88 / \textcolor{gray}{64.84\%} & 1.78 / \textcolor{gray}{13.65\%} & 2.36 / \underline{87.94\%} & 0.73 / \textcolor{gray}{47.52\%} & 0.83 / \textcolor{gray}{55.08\%} & 0.72 / \textcolor{gray}{52.76\%} & 0.58 / \textcolor{gray}{73.90\%} \\
OrdinalRC & 2.41 / \textcolor{gray}{75.19\%} & 2.02 / \textcolor{gray}{67.38\%} & 1.93 / \textcolor{gray}{14.58\%} & 2.51 / 90.30\% & 0.82 / \textcolor{gray}{49.46\%} & 0.91 / \textcolor{gray}{57.11\%} & 0.80 / \textcolor{gray}{54.61\%} & 0.60 / \textcolor{gray}{74.43\%} \\
\midrule
\multicolumn{9}{c}{\textbf{DeepSeek-R1-Distill-Qwen-32B}} \\ \hline
CQR & 1.15 / 95.02\% & 2.67 / 94.34\% & 1.32 / 94.44\% & 2.13 / 93.67\% & 3.48 / 96.80\% & 3.82 / 96.54\% & 2.99 / 96.80\% & 3.46 / 95.63\% \\
Asym CQR & 1.31 / 95.99\% & 2.72 / 94.83\% & 1.49 / 95.57\% & 2.21 / 94.53\% & 3.84 / 99.08\% & 3.95 / 99.27\% & 2.88 / 96.45\% & 3.84 / 98.47\% \\
CHR & 0.87 / 93.96\% & \textbf{2.23 / 91.42\%} & 0.91 / 91.98\% & \textbf{1.87 / 90.84\%} & \textbf{2.69 / 86.80\%} & \textbf{1.97 / 85.90\%} & \textbf{1.39 / 85.96\%} & 2.01 / \underline{86.60\%} \\
LVD & 0.97 / 95.01\% & 2.44 / 94.58\% & 1.00 / 93.21\% & 2.04 / 94.12\% & 3.28 / 95.27\% & 2.67 / 93.75\% & 2.24 / 96.36\% & 3.03 / 94.40\% \\
Boosted CQR & 1.08 / 93.55\% & 2.37 / 93.96\% & 1.15 / 93.48\% & 2.01 / 93.72\% & 3.20 / 95.71\% & 2.52 / 93.30\% & 1.79 / 93.25\% & 2.94 / 92.23\% \\
Boosted LCP & 0.76 / 92.03\% & 2.32 / 92.37\% & 0.93 / 91.34\% & 1.92 / 92.81\% & 3.46 / 95.95\% & 2.80 / 91.94\% & 1.87 / 92.89\% & 3.36 / 93.63\% \\
R2CCP & \textbf{0.68 / 91.57\%} & 2.30 / 93.22\% & \textbf{0.89 / 91.80\%} & 1.99 / 92.96\% & 2.91 / 90.58\% & 2.25 / \underline{89.97\%} & 1.80 / 92.35\% & \textbf{1.82 / 86.93\%} \\
OrdinalAPS & 2.51 / 90.06\% & 2.52 / 90.64\% & 3.76 / 91.08\% & 2.13 / \underline{89.98\%} & 1.32 / \textcolor{gray}{60.00\%} & 1.26 / \textcolor{gray}{78.22\%} & 1.46 / \underline{87.85\%} & 1.50 / \underline{85.67\%} \\
OrdinalRC & 2.54 / 90.11\% & 2.56 / 91.18\% & 3.73 / \underline{89.53\%} & 2.14 / 90.07\% & 1.44 / \textcolor{gray}{62.35\%} & 1.33 / \textcolor{gray}{78.22\%} & 1.52 / \underline{88.33\%} & 1.55 / \underline{86.07\%} \\
\midrule
\multicolumn{9}{c}{\textbf{Qwen2.5-72B-Instruct}} \\ \hline
CQR & 0.98 / 94.35\% & 2.72 / 94.18\% & 1.45 / 94.79\% & 2.10 / 94.02\% & 3.36 / 95.07\% & 3.79 / 97.08\% & 3.01 / 97.68\% & 3.34 / 95.33\% \\
Asym CQR & 1.10 / 95.47\% & 2.79 / 94.70\% & 1.64 / 95.63\% & 2.17 / 94.85\% & 3.85 / 99.18\% & 3.89 / 98.67\% & 2.77 / 97.06\% & 3.87 / 98.97\% \\
CHR & 0.66 / 92.21\% & 2.14 / \underline{86.10\%} & 0.98 / 91.16\% & \textbf{1.61 / 85.78\%} & 2.49 / \textcolor{gray}{82.14\%} & 2.05 / \textcolor{gray}{82.89\%} & \textbf{1.18 / 84.56\%} & 1.79 / \underline{85.27\%} \\
LVD & 0.85 / 95.11\% & 2.56 / 94.05\% & 1.09 / 93.45\% & 1.95 / 93.86\% & 3.07 / 92.01\% & 2.67 / 93.87\% & 1.91 / 95.53\% & 2.87 / 93.43\% \\
Boosted CQR & 0.81 / 92.36\% & 2.47 / 93.06\% & 1.25 / 93.66\% & 1.88 / 92.81\% & 3.10 / 94.01\% & 2.56 / 90.79\% & 1.49 / 92.11\% & 2.82 / 92.03\% \\
Boosted LCP & 0.65 / 91.26\% & 2.44 / 92.26\% & \textbf{0.93 / 91.20\%} & 1.86 / 92.57\% & 3.40 / 94.90\% & 2.84 / 92.41\% & 1.79 / 91.84\% & 3.33 / 92.90\% \\
R2CCP & \textbf{0.59 / 91.83\%} & 2.43 / 92.78\% & 0.95 / 92.12\% & 1.98 / 93.72\% & 2.88 / \underline{89.29\%} & 2.34 / 90.00\% & 1.55 / 90.20\% & \textbf{1.96 / 88.57\%} \\
OrdinalAPS & 2.86 / 90.18\% & 3.01 / 90.59\% & 3.05 / \textcolor{gray}{45.43\%} & 2.75 / 90.29\% & 0.71 / \textcolor{gray}{55.99\%} & 0.25 / \textcolor{gray}{56.83\%} & 0.67 / \textcolor{gray}{77.68\%} & 0.46 / \textcolor{gray}{70.87\%} \\
OrdinalRC & 2.85 / 90.00\% & 2.96 / \underline{89.35\%} & 3.21 / \textcolor{gray}{53.31\%} & 2.75 / 90.14\% & 0.75 / \textcolor{gray}{57.28\%} & 0.29 / \textcolor{gray}{56.83\%} & 0.80 / \textcolor{gray}{79.74\%} & 0.49 / \textcolor{gray}{71.37\%} \\ 
\midrule
\multicolumn{9}{c}{\textbf{Multi-Agent LLM-as-a-Judge}} \\ \hline
CQR & 1.09 / 94.91\% & 2.74 / 94.43\% & 1.40 / 94.43\% & 2.11 / 93.76\% & 3.45 / 95.70\% & 3.81 / 96.86\% & 3.01 / 97.07\% & 3.42 / 95.51\% \\
Asym CQR & 1.22 / 95.81\% & 2.80 / 94.93\% & 1.58 / 95.35\% & 2.18 / 94.54\% & 3.86 / 99.04\% & 3.92 / 98.92\% & 2.85 / 96.76\% & 3.86 / 98.73\% \\
CHR & 0.74 / 92.64\% & 2.24 / \underline{88.29\%} & 0.95 / 91.30\% & 1.74 / \underline{88.27\%} & 2.58 / \textcolor{gray}{83.69\%} & 1.98 / \textcolor{gray}{83.27\%} & 1.32 / \textcolor{gray}{84.76\%} & 1.90 / \underline{85.52\%} \\
LVD & 0.94 / 94.76\% & 2.57 / 94.14\% & 1.07 / 93.19\% & 2.01 / 93.94\% & 3.16 / 92.87\% & 2.64 / 93.09\% & 2.13 / 95.63\% & 2.97 / 93.83\% \\
Boosted CQR & 0.95 / 92.89\% & 2.51 / 93.35\% & 1.29 / 93.77\% & 1.96 / 93.18\% & 3.16 / 94.34\% & 2.56 / 91.56\% & 1.72 / 92.58\% & 2.91 / 91.99\% \\
Boosted LCP & 0.71 / 91.71\% & 2.46 / 92.63\% & 0.92 / 91.17\% & 1.90 / 92.69\% & 3.49 / 95.42\% & 2.86 / 91.99\% & 1.86 / 92.25\% & 3.33 / 93.10\% \\
R2CCP & \textbf{0.65 / 91.85\%} & \textbf{2.44 / 92.93\%} & \textbf{0.92 / 91.74\%} & \textbf{1.98 / 93.35\%} & 2.91 / \underline{89.75\%} & 2.32 / \underline{89.83\%} & \textbf{1.70 / 91.03\%} & 1.92 / \underline{87.64\%} \\
OrdinalAPS & 2.56 / \textcolor{gray}{84.21\%} & 2.55 / \textcolor{gray}{83.82\%} & 2.98 / \textcolor{gray}{53.29\%} & 2.42 / \underline{89.53\%} & 0.91 / \textcolor{gray}{54.42\%} & 0.77 / \textcolor{gray}{64.96\%} & 1.00 / \textcolor{gray}{73.64\%} & 0.91 / \textcolor{gray}{77.54\%} \\
OrdinalRC & 2.61 / \underline{85.34\%} & 2.58 / \textcolor{gray}{84.16\%} & 3.08 / \textcolor{gray}{56.26\%} & 2.46 / 90.16\% & 0.99 / \textcolor{gray}{56.25\%} & 0.83 / \textcolor{gray}{65.37\%} & 1.09 / \textcolor{gray}{75.00\%} & 0.95 / \textcolor{gray}{78.00\%} \\
\bottomrule
\end{tabular}
}
\end{table*}

\section{Related work}\label{related-work}

\paragraph{Conformal Prediction and Robustness.}
Conformal Prediction (CP) is a distribution-free framework for uncertainty quantification that provides finite-sample marginal coverage guarantees \cite{shafer2008tutorial}. It has been widely used to construct prediction intervals in regression and prediction sets in classification, without making assumptions about the underlying data distribution, which has a broader application in artificial intelligence ~\cite{liu2019g,liu2021neural,liu2021kompare,liu2022joint,liu2022comparative,liu2022knowledge,liu2023knowledge,liu2024logic,liu2024can,liu2024new,liu2024conversational,liu2025neural,liu2025few,liu2025monte,liu2025hyperkgr,liu2025mixrag,liu2024knowledge,liu2026neural,liuneural,liu2026accurate,liu2026ambiguous,liu2026dynamic,liu2026symbolic,liu2025unifying,liu2026morgan,liu2026prompt,liu2026neural,wu2026mixture,liu2026multi,liu2026neural2,liu2023knowledge2,liu2025grapho1}
 . 
Recent studies show that standard CP can fail to maintain coverage when inputs are adversarially perturbed \cite{gendler2022adversarially}. A small number of works have explored robust variants of CP. Early approaches lack formal guarantees \cite{hechtlinger2018cautious}, while more recent methods, such as Randomly Smoothed Conformal Prediction (RSCP), combine CP with randomized smoothing to provide robustness guarantees \cite{gendler2022adversarially}. Although effective in theory, RSCP relies on inflating conformal quantiles in proportion to the perturbation radius, often resulting in overly conservative and large prediction sets. Overall, robust conformal prediction remains an underexplored area, especially in complex decision-making settings.

\paragraph{Uncertainty Quantification for LLM-as-a-Judge.}
Uncertainty quantification for LLM-based judges is an emerging but still limited area of research. Existing approaches typically rely on token-level probabilities \cite{wagner2024black, xie2025empirical}, self-reported confidence \citep{yona2024can, xu2024sayself}, or consistency across multiple generations \citep{tian2023just}. These methods often suffer from bias, instability, overconfidence, or high computational cost.

Conformal prediction has recently attracted attention as a principled, post-hoc uncertainty quantification tool for LLMs due to its distribution-free guarantees \citep{ye2024benchmarking, campos2024conformal}. Most existing work applies CP to classification-style tasks, such as multiple-choice question answering or response selection, where the goal is to construct unordered prediction sets that include the correct answer with high probability. The most closely related work to ours applies conformalized risk control to align LLM judgments with human preferences in pairwise comparisons \citep{jung2024trust}.

\section{Conclusion}

This work addresses a key limitation of LLM-as-a-Judge evaluation by moving beyond uncertainty estimates derived from a single LLM. We propose a robust, multi-LLM interval adaptation framework that integrates seamlessly with existing conformal prediction methods and explicitly accounts for inter-LLM variability in rating-based evaluation tasks. Through extensive experiments across multiple LLMs and conformal techniques, we show that incorporating multiple judges produces more stable prediction intervals with improved robustness, while maintaining strong coverage guarantees.

\section{Ethical Considerations}

We have carefully assessed the potential risks associated with our work and do not foresee any significant ethical concerns. Our framework is designed with a strong emphasis on usability and ease of implementation, lowering adoption barriers while minimizing operational complexities. Additionally, our research is built upon an open-source dataset, ensuring transparency, fostering collaboration, and promoting ethical integrity by providing accessible and reproducible data.

\section{Limitations}

Despite its effectiveness, our approach has certain limitations. First, our training data is limited in scope, primarily covering specific domains rather than offering broad generalization across diverse topics. This constraint may impact the model's performance when handling queries beyond these predefined areas. Furthermore, while our knowledge graph provides valuable contextual information, it remains inherently incomplete. Certain regions may lack sufficient data or relational links, potentially leading to gaps in the model’s reasoning and inference capabilities.

\bibliographystyle{acl_natbib}
\bibliography{008reference.bib,liu}

\begin{thebibliography}{70}
\expandafter\ifx\csname natexlab\endcsname\relax\def\natexlab#1{#1}\fi

\bibitem[{Barbosa et~al.(2025)Barbosa, Gondhali, Petrossian, Sharma, Chakraborty, Jacquet, and Freire}]{traffic}
Juliana~Silva Barbosa, Ulhas Gondhali, Gohar Petrossian, Kinshuk Sharma, Sunandan Chakraborty, Jennifer Jacquet, and Juliana Freire. 2025.
\newblock A cost-effective llm-based approach to identify wildlife trafficking in online marketplaces.
\newblock \emph{Proceedings of the ACM on Management of Data}, 3(3):1--23.

\bibitem[{Bedi et~al.(2025)Bedi, Liu, Orr-Ewing, Dash, Koyejo, Callahan, Fries, Wornow, Swaminathan, Lehmann et~al.}]{healthcare}
Suhana Bedi, Yutong Liu, Lucy Orr-Ewing, Dev Dash, Sanmi Koyejo, Alison Callahan, Jason~A Fries, Michael Wornow, Akshay Swaminathan, Lisa~Soleymani Lehmann, et~al. 2025.
\newblock Testing and evaluation of health care applications of large language models: a systematic review.
\newblock \emph{Jama}.

\bibitem[{Camburu et~al.(2018)Camburu, Rockt\"{a}schel, Lukasiewicz, and Blunsom}]{e_SNLI}
Oana-Maria Camburu, Tim Rockt\"{a}schel, Thomas Lukasiewicz, and Phil Blunsom. 2018.
\newblock e-snli: natural language inference with natural language explanations.
\newblock In \emph{Proceedings of the 32nd International Conference on Neural Information Processing Systems}, NIPS'18, page 9560–9572, Red Hook, NY, USA. Curran Associates Inc.

\bibitem[{Campos et~al.(2024)Campos, Farinhas, Zerva, Figueiredo, and Martins}]{campos2024conformal}
Margarida~M Campos, Ant{\'o}nio Farinhas, Chrysoula Zerva, M{\'a}rio~AT Figueiredo, and Andr{\'e}~FT Martins. 2024.
\newblock Conformal prediction for natural language processing: A survey.
\newblock \emph{arXiv preprint arXiv:2405.01976}.

\bibitem[{DeepSeek-AI(2025)}]{deepseekai2025deepseekr1incentivizingreasoningcapability}
DeepSeek-AI. 2025.
\newblock \href {http://arxiv.org/abs/2501.12948} {Deepseek-r1: Incentivizing reasoning capability in llms via reinforcement learning}.

\bibitem[{Dua et~al.(2019)Dua, Wang, Dasigi, Stanovsky, Singh, and Gardner}]{dua-etal-2019-drop}
Dheeru Dua, Yizhong Wang, Pradeep Dasigi, Gabriel Stanovsky, Sameer Singh, and Matt Gardner. 2019.
\newblock {DROP}: A reading comprehension benchmark requiring discrete reasoning over paragraphs.
\newblock In \emph{Proceedings of the 2019 Conference of the North {A}merican Chapter of the Association for Computational Linguistics: Human Language Technologies, Volume 1 (Long and Short Papers)}, Minneapolis, Minnesota. Association for Computational Linguistics.

\bibitem[{Fabbri et~al.(2021)Fabbri, Kryściński, McCann, Xiong, Socher, and Radev}]{SummEval}
Alexander~R. Fabbri, Wojciech Kryściński, Bryan McCann, Caiming Xiong, Richard Socher, and Dragomir Radev. 2021.
\newblock \href {http://arxiv.org/abs/2007.12626} {Summeval: Re-evaluating summarization evaluation}.

\bibitem[{Gendler et~al.(2022)Gendler, Weng, Daniel, and Romano}]{gendler2022adversarially}
Asaf Gendler, Tsui-Wei Weng, Luca Daniel, and Yaniv Romano. 2022.
\newblock Adversarially robust conformal prediction.
\newblock In \emph{International Conference on Learning Representations (ICLR)}.

\bibitem[{Golovneva et~al.(2023)Golovneva, Chen, Poff, Corredor, Zettlemoyer, Fazel-Zarandi, and Celikyilmaz}]{golovneva2023roscoesuitemetricsscoring}
Olga Golovneva, Moya Chen, Spencer Poff, Martin Corredor, Luke Zettlemoyer, Maryam Fazel-Zarandi, and Asli Celikyilmaz. 2023.
\newblock \href {http://arxiv.org/abs/2212.07919} {Roscoe: A suite of metrics for scoring step-by-step reasoning}.

\bibitem[{Goo and Chen(2018)}]{DialSumm}
Chih-Wen Goo and Yun-Nung Chen. 2018.
\newblock Abstractive dialogue summarization with sentence-gated modeling optimized by dialogue acts.
\newblock In \emph{Proceedings of 7th IEEE Workshop on Spoken Language Technology}.

\bibitem[{Gu et~al.(2025)Gu, Jiang, Shi, Tan, Zhai, Xu, Li, Shen, Ma, Liu, Wang, Zhang, Wang, Gao, Ni, and Guo}]{unstable}
Jiawei Gu, Xuhui Jiang, Zhichao Shi, Hexiang Tan, Xuehao Zhai, Chengjin Xu, Wei Li, Yinghan Shen, Shengjie Ma, Honghao Liu, Saizhuo Wang, Kun Zhang, Yuanzhuo Wang, Wen Gao, Lionel Ni, and Jian Guo. 2025.
\newblock \href {http://arxiv.org/abs/2411.15594} {A survey on llm-as-a-judge}.

\bibitem[{Guha et~al.(2024)Guha, Natarajan, M{\"o}llenhoff, Khan, and Ndiaye}]{r2cpp}
Etash Guha, Shlok Natarajan, Thomas M{\"o}llenhoff, Mohammad~Emtiyaz Khan, and Eugene Ndiaye. 2024.
\newblock Conformal prediction via regression-as-classification.
\newblock \emph{arXiv preprint arXiv:2404.08168}.

\bibitem[{He et~al.(2024)He, Zhang, and Roth}]{he-etal-2024-socreval}
Hangfeng He, Hongming Zhang, and Dan Roth. 2024.
\newblock \href {https://doi.org/10.18653/v1/2024.findings-naacl.175} {{S}oc{RE}val: Large language models with the socratic method for reference-free reasoning evaluation}.
\newblock In \emph{Findings of the Association for Computational Linguistics: NAACL 2024}, pages 2736--2764. Association for Computational Linguistics.

\bibitem[{Hechtlinger et~al.(2018)Hechtlinger, P{\'o}czos, and Wasserman}]{hechtlinger2018cautious}
Yotam Hechtlinger, Barnab{\'a}s P{\'o}czos, and Larry Wasserman. 2018.
\newblock Cautious deep learning.
\newblock \emph{arXiv preprint arXiv:1805.09460}.

\bibitem[{Huang et~al.(2019)Huang, Bras, Bhagavatula, and Choi}]{huang2019cosmosqamachinereading}
Lifu Huang, Ronan~Le Bras, Chandra Bhagavatula, and Yejin Choi. 2019.
\newblock \href {http://arxiv.org/abs/1909.00277} {Cosmos qa: Machine reading comprehension with contextual commonsense reasoning}.

\bibitem[{Jung et~al.(2024)Jung, Brahman, and Choi}]{jung2024trust}
Jaehun Jung, Faeze Brahman, and Yejin Choi. 2024.
\newblock Trust or escalate: Llm judges with provable guarantees for human agreement.

\bibitem[{Lin(2004)}]{lin-2004-rouge}
Chin-Yew Lin. 2004.
\newblock \href {https://aclanthology.org/W04-1013/} {{ROUGE}: A package for automatic evaluation of summaries}.
\newblock In \emph{Text Summarization Branches Out}, pages 74--81, Barcelona, Spain. Association for Computational Linguistics.

\bibitem[{Lin et~al.(2021)Lin, Trivedi, and Sun}]{LVD}
Zhen Lin, Shubhendu Trivedi, and Jimeng Sun. 2021.
\newblock Locally valid and discriminative prediction intervals for deep learning models.
\newblock \emph{Advances in Neural Information Processing Systems}, 34:8378--8391.

\bibitem[{Liu(2024)}]{liu2024knowledge}
Lihui Liu. 2024.
\newblock \emph{Knowledge graph reasoning and its applications: A pathway towards neural symbolic AI}.
\newblock Ph.D. thesis, University of Illinois at Urbana-Champaign.

\bibitem[{Liu(2025{\natexlab{a}})}]{liu2025grapho1}
Lihui Liu. 2025{\natexlab{a}}.
\newblock Graph-o1: Monte carlo tree search with reinforcement learning for text-attributed graph reasoning.
\newblock \emph{arXiv preprint arXiv:2512.17912}.

\bibitem[{Liu(2025{\natexlab{b}})}]{liu2025hyperkgr}
Lihui Liu. 2025{\natexlab{b}}.
\newblock Hyperkgr: Knowledge graph reasoning in hyperbolic space with graph neural network encoding symbolic path.
\newblock In \emph{Proceedings of the 2025 Conference on Empirical Methods in Natural Language Processing}, pages 25188--25199.

\bibitem[{Liu(2025{\natexlab{c}})}]{liu2025monte}
Lihui Liu. 2025{\natexlab{c}}.
\newblock Monte carlo tree search for graph reasoning in large language model agents.
\newblock In \emph{Proceedings of the 34th ACM International Conference on Information and Knowledge Management}, pages 4966--4970.

\bibitem[{Liu(2026{\natexlab{a}})}]{liu2026multi}
Lihui Liu. 2026{\natexlab{a}}.
\newblock Multi-hop reasoning and retrieval in embedding space: Leveraging large language models with knowledge.
\newblock \emph{arXiv preprint arXiv:2603.13266}.

\bibitem[{Liu(2026{\natexlab{b}})}]{liu2026neural2}
Lihui Liu. 2026{\natexlab{b}}.
\newblock Neural-symbolic logic query answering in non-euclidean space.
\newblock \emph{arXiv preprint arXiv:2603.15633}.

\bibitem[{Liu et~al.(2023{\natexlab{a}})Liu, Chen, Das, Yang, and Tong}]{liu2023knowledge}
Lihui Liu, Yuzhong Chen, Mahashweta Das, Hao Yang, and Hanghang Tong. 2023{\natexlab{a}}.
\newblock Knowledge graph question answering with ambiguous query.
\newblock In \emph{Proceedings of the ACM Web Conference 2023}, pages 2477--2486.

\bibitem[{Liu et~al.(2025{\natexlab{a}})Liu, Ding, Mukherjee, and Yang}]{liu2025mixrag}
Lihui Liu, Jiayuan Ding, Subhabrata Mukherjee, and Carl~J Yang. 2025{\natexlab{a}}.
\newblock Mixrag: Mixture-of-experts retrieval-augmented generation for textual graph understanding and question answering.
\newblock \emph{arXiv preprint arXiv:2509.21391}.

\bibitem[{Liu et~al.(2021{\natexlab{a}})Liu, Du, Fung, Ji, Xu, and Tong}]{liu2021kompare}
Lihui Liu, Boxin Du, Yi~Ren Fung, Heng Ji, Jiejun Xu, and Hanghang Tong. 2021{\natexlab{a}}.
\newblock Kompare: A knowledge graph comparative reasoning system.
\newblock In \emph{Proceedings of the 27th ACM SIGKDD Conference on Knowledge Discovery \& Data Mining}, pages 3308--3318.

\bibitem[{Liu et~al.(2021{\natexlab{b}})Liu, Du, Ji, Zhai, and Tong}]{liu2021neural}
Lihui Liu, Boxin Du, Heng Ji, ChengXiang Zhai, and Hanghang Tong. 2021{\natexlab{b}}.
\newblock Neural-answering logical queries on knowledge graphs.
\newblock In \emph{Proceedings of the 27th ACM SIGKDD Conference on Knowledge Discovery \& Data Mining}, pages 1087--1097.

\bibitem[{Liu et~al.(2019)Liu, Du, Tong et~al.}]{liu2019g}
Lihui Liu, Boxin Du, Hanghang Tong, et~al. 2019.
\newblock G-finder: Approximate attributed subgraph matching.
\newblock In \emph{2019 IEEE International Conference on Big Data (Big Data)}, pages 513--522. IEEE.

\bibitem[{Liu et~al.(2022{\natexlab{a}})Liu, Du, Xu, Xia, and Tong}]{liu2022joint}
Lihui Liu, Boxin Du, Jiejun Xu, Yinglong Xia, and Hanghang Tong. 2022{\natexlab{a}}.
\newblock Joint knowledge graph completion and question answering.
\newblock In \emph{Proceedings of the 28th ACM SIGKDD Conference on Knowledge Discovery and Data Mining}, pages 1098--1108.

\bibitem[{Liu et~al.(2024{\natexlab{a}})Liu, Hill, Du, Wang, and Tong}]{liu2024conversational}
Lihui Liu, Blaine Hill, Boxin Du, Fei Wang, and Hanghang Tong. 2024{\natexlab{a}}.
\newblock Conversational question answering with language models generated reformulations over knowledge graph.
\newblock In \emph{Findings of the Association for Computational Linguistics ACL 2024}, pages 839--850.

\bibitem[{Liu et~al.(2022{\natexlab{b}})Liu, Ji, Xu, and Tong}]{liu2022comparative}
Lihui Liu, Houxiang Ji, Jiejun Xu, and Hanghang Tong. 2022{\natexlab{b}}.
\newblock Comparative reasoning for knowledge graph fact checking.
\newblock In \emph{2022 IEEE International Conference on Big Data (Big Data)}, pages 2309--2312. IEEE.

\bibitem[{Liu et~al.(2024{\natexlab{b}})Liu, Kim, and Bansal}]{liu2024can}
Lihui Liu, Jinha Kim, and Vidit Bansal. 2024{\natexlab{b}}.
\newblock Can contrastive learning refine embeddings.
\newblock \emph{arXiv preprint arXiv:2404.08701}.

\bibitem[{Liu and Shu(2025)}]{liu2025unifying}
Lihui Liu and Kai Shu. 2025.
\newblock Unifying knowledge in agentic llms: Concepts, methods, and recent advancements.
\newblock \emph{ACM SIGKDD Explorations Newsletter}, 27(2):88--96.

\bibitem[{Liu and Tong()}]{liuneural}
Lihui Liu and Hanghang Tong.
\newblock Neural symbolic knowledge graph reasoning.

\bibitem[{Liu and Tong(2023)}]{liu2023knowledge2}
Lihui Liu and Hanghang Tong. 2023.
\newblock Knowledge graph reasoning and its applications.
\newblock In \emph{Proceedings of the 29th ACM SIGKDD Conference on Knowledge Discovery and Data Mining}, pages 5813--5814.

\bibitem[{Liu and Tong(2026{\natexlab{a}})}]{liu2026accurate}
Lihui Liu and Hanghang Tong. 2026{\natexlab{a}}.
\newblock Accurate query answering with symbolic reasoning over complete kgs.
\newblock In \emph{Neural Symbolic Knowledge Graph Reasoning: A Pathway Towards Neural Symbolic AI}, pages 17--36. Springer.

\bibitem[{Liu and Tong(2026{\natexlab{b}})}]{liu2026ambiguous}
Lihui Liu and Hanghang Tong. 2026{\natexlab{b}}.
\newblock Ambiguous query answering with neural symbolic reasoning over incomplete kg.
\newblock In \emph{Neural Symbolic Knowledge Graph Reasoning: A Pathway Towards Neural Symbolic AI}, pages 89--106. Springer.

\bibitem[{Liu and Tong(2026{\natexlab{c}})}]{liu2026dynamic}
Lihui Liu and Hanghang Tong. 2026{\natexlab{c}}.
\newblock Dynamic query answering with neural symbolic reasoning over incomplete kg.
\newblock In \emph{Neural Symbolic Knowledge Graph Reasoning: A Pathway Towards Neural Symbolic AI}, pages 121--136. Springer.

\bibitem[{Liu and Tong(2026{\natexlab{d}})}]{liu2026neural}
Lihui Liu and Hanghang Tong. 2026{\natexlab{d}}.
\newblock \emph{Neural Symbolic Knowledge Graph Reasoning: A Pathway Towards Neural Symbolic AI}.
\newblock Springer Nature.

\bibitem[{Liu and Tong(2026{\natexlab{e}})}]{liu2026symbolic}
Lihui Liu and Hanghang Tong. 2026{\natexlab{e}}.
\newblock Symbolic reasoning for inconsistency detection over complete kg.
\newblock In \emph{Neural Symbolic Knowledge Graph Reasoning: A Pathway Towards Neural Symbolic AI}, pages 37--54. Springer.

\bibitem[{Liu et~al.(2024{\natexlab{c}})Liu, Wang, Bai, Song, and Tong}]{liu2024new}
Lihui Liu, Zihao Wang, Jiaxin Bai, Yangqiu Song, and Hanghang Tong. 2024{\natexlab{c}}.
\newblock New frontiers of knowledge graph reasoning: Recent advances and future trends.
\newblock In \emph{Companion Proceedings of the ACM Web Conference 2024}, pages 1294--1297.

\bibitem[{Liu et~al.(2024{\natexlab{d}})Liu, Wang, Qiu, Ban, Chan, Song, He, and Tong}]{liu2024logic}
Lihui Liu, Zihao Wang, Ruizhong Qiu, Yikun Ban, Eunice Chan, Yangqiu Song, Jingrui He, and Hanghang Tong. 2024{\natexlab{d}}.
\newblock Logic query of thoughts: Guiding large language models to answer complex logic queries with knowledge graphs.
\newblock \emph{arXiv preprint arXiv:2404.04264}.

\bibitem[{Liu et~al.(2025{\natexlab{b}})Liu, Wang, and Tong}]{liu2025neural}
Lihui Liu, Zihao Wang, and Hanghang Tong. 2025{\natexlab{b}}.
\newblock Neural-symbolic reasoning over knowledge graphs: A survey from a query perspective.
\newblock \emph{ACM SIGKDD Explorations Newsletter}, 27(1):124--136.

\bibitem[{Liu et~al.(2025{\natexlab{c}})Liu, Wang, Zhou, Wang, Yan, Xiong, He, and Tong}]{liu2025few}
Lihui Liu, Zihao Wang, Dawei Zhou, Ruijie Wang, Yuchen Yan, Bo~Xiong, Sihong He, and Hanghang Tong. 2025{\natexlab{c}}.
\newblock Few-shot knowledge graph completion via transfer knowledge from similar tasks.
\newblock In \emph{Proceedings of the 34th ACM International Conference on Information and Knowledge Management}, pages 4960--4965.

\bibitem[{Liu and Yan(2026)}]{liu2026morgan}
Lihui Liu and Yuchen Yan. 2026.
\newblock Morgan: To bridge mixture of experts and spectral graph neural network.
\newblock In \emph{Proceedings of the AAAI Conference on Artificial Intelligence}, volume~40, pages 23783--23791.

\bibitem[{Liu and Yang(2026)}]{liu2026prompt}
Lihui Liu and Carl Yang. 2026.
\newblock Prompt-tuning with attribute guidance for low-resource entity matching.
\newblock \emph{arXiv preprint arXiv:2603.19321}.

\bibitem[{Liu et~al.(2022{\natexlab{c}})Liu, Zhao, Du, Fung, Ji, Xu, and Tong}]{liu2022knowledge}
Lihui Liu, Ruining Zhao, Boxin Du, Yi~Ren Fung, Heng Ji, Jiejun Xu, and Hanghang Tong. 2022{\natexlab{c}}.
\newblock Knowledge graph comparative reasoning for fact checking: Problem definition and algorithms.
\newblock \emph{IEEE Data Eng. Bull.}, 45(4):19--38.

\bibitem[{Liu et~al.(2023{\natexlab{b}})Liu, Iter, Xu, Wang, Xu, and Zhu}]{liu-etal-2023-g}
Yang Liu, Dan Iter, Yichong Xu, Shuohang Wang, Ruochen Xu, and Chenguang Zhu. 2023{\natexlab{b}}.
\newblock {G}-eval: {NLG} evaluation using gpt-4 with better human alignment.
\newblock In \emph{Proceedings of the 2023 Conference on Empirical Methods in Natural Language Processing}, Singapore. Association for Computational Linguistics.

\bibitem[{Lu et~al.(2022)Lu, Angelopoulos, and Pomerantz}]{aps}
Charles Lu, Anastasios~N Angelopoulos, and Stuart Pomerantz. 2022.
\newblock Improving trustworthiness of ai disease severity rating in medical imaging with ordinal conformal prediction sets.
\newblock In \emph{International conference on medical image computing and computer-assisted intervention}, pages 545--554. Springer.

\bibitem[{Papineni et~al.(2002)Papineni, Roukos, Ward, and Zhu}]{blue}
Kishore Papineni, Salim Roukos, Todd Ward, and Wei-Jing Zhu. 2002.
\newblock Bleu: a method for automatic evaluation of machine translation.
\newblock In \emph{Proceedings of the 40th Annual Meeting on Association for Computational Linguistics}, ACL '02, page 311–318, USA. Association for Computational Linguistics.

\bibitem[{Qwen et~al.(2025)Qwen, :, Yang, Yang, and Zhang}]{qwen2025qwen25technicalreport}
Qwen, :, An~Yang, Baosong Yang, and Beichen Zhang. 2025.
\newblock \href {http://arxiv.org/abs/2412.15115} {Qwen2.5 technical report}.

\bibitem[{Romano et~al.(2019)Romano, Patterson, and Candes}]{CQR}
Yaniv Romano, Evan Patterson, and Emmanuel Candes. 2019.
\newblock Conformalized quantile regression.
\newblock \emph{Advances in neural information processing systems}, 32.

\bibitem[{Sesia and Romano(2021)}]{CHR}
Matteo Sesia and Yaniv Romano. 2021.
\newblock Conformal prediction using conditional histograms.
\newblock \emph{Advances in neural information processing systems}, 34:6304--6315.

\bibitem[{Shafer and Vovk(2008{\natexlab{a}})}]{conformal_prediction}
Glenn Shafer and Vladimir Vovk. 2008{\natexlab{a}}.
\newblock A tutorial on conformal prediction.
\newblock \emph{Journal of Machine Learning Research}, 9(3).

\bibitem[{Shafer and Vovk(2008{\natexlab{b}})}]{shafer2008tutorial}
Glenn Shafer and Vladimir Vovk. 2008{\natexlab{b}}.
\newblock A tutorial on conformal prediction.
\newblock \emph{Journal of Machine Learning Research}, 9(3).

\bibitem[{Sheng et~al.(2025)Sheng, Liu, He, Zhao, and Kang}]{Jian}
Huanxin Sheng, Xinyi Liu, Hangfeng He, Jieyu Zhao, and Jian Kang. 2025.
\newblock Analyzing uncertainty of {LLM}-as-a-judge: Interval evaluations with conformal prediction.
\newblock In \emph{Proceedings of the 2025 Conference on Empirical Methods in Natural Language Processing}, Suzhou, China. Association for Computational Linguistics.

\bibitem[{Tian et~al.(2023)Tian, Mitchell, Zhou, Sharma, Rafailov, Yao, Finn, and Manning}]{tian2023just}
Katherine Tian, Eric Mitchell, Allan Zhou, Archit Sharma, Rafael Rafailov, Huaxiu Yao, Chelsea Finn, and Christopher~D Manning. 2023.
\newblock Just ask for calibration: Strategies for eliciting calibrated confidence scores from language models fine-tuned with human feedback.

\bibitem[{Wagner et~al.(2024)Wagner, Desmond, Nair, Ashktorab, Daly, Pan, Cooper, Johnson, and Geyer}]{wagner2024black}
Nico Wagner, Michael Desmond, Rahul Nair, Zahra Ashktorab, Elizabeth~M Daly, Qian Pan, Martin~Santillan Cooper, James~M Johnson, and Werner Geyer. 2024.
\newblock Black-box uncertainty quantification method for llm-as-a-judge.

\bibitem[{Wu and Liu(2026)}]{wu2026mixture}
Yukun Wu and Lihui Liu. 2026.
\newblock Mixture of demonstrations for textual graph understanding and question answering.
\newblock \emph{arXiv preprint arXiv:2603.23554}.

\bibitem[{Xie et~al.(2025)Xie, Li, Yu, Zhang, Zhang, and Yang}]{xie2025empirical}
Qiujie Xie, Qingqiu Li, Zhuohao Yu, Yuejie Zhang, Yue Zhang, and Linyi Yang. 2025.
\newblock An empirical analysis of uncertainty in large language model evaluations.
\newblock \emph{arXiv preprint}.

\bibitem[{Xie et~al.(2024)Xie, Barber, and Candes}]{boost}
Ran Xie, Rina Barber, and Emmanuel Candes. 2024.
\newblock Boosted conformal prediction intervals.
\newblock \emph{Advances in Neural Information Processing Systems}, 37:71868--71899.

\bibitem[{Xu et~al.(2024)Xu, Wu, Diao, Liu, Wang, Chen, and Gao}]{xu2024sayself}
Tianyang Xu, Shujin Wu, Shizhe Diao, Xiaoze Liu, Xingyao Wang, Yangyi Chen, and Jing Gao. 2024.
\newblock Sayself: Teaching llms to express confidence with self-reflective rationales.

\bibitem[{Xu et~al.(2023)Xu, Guo, and Wei}]{ordinal_risk_control}
Yunpeng Xu, Wenge Guo, and Zhi Wei. 2023.
\newblock Conformal risk control for ordinal classification.
\newblock In \emph{Uncertainty in Artificial Intelligence}, pages 2346--2355. PMLR.

\bibitem[{Ye et~al.(2024)Ye, Yang, Pang, Wang, Wong, Yilmaz, Shi, and Tu}]{ye2024benchmarking}
Fanghua Ye, Mingming Yang, Jianhui Pang, Longyue Wang, Derek~F Wong, Emine Yilmaz, Shuming Shi, and Zhaopeng Tu. 2024.
\newblock Benchmarking llms via uncertainty quantification.

\bibitem[{Yona et~al.(2024)Yona, Aharoni, and Geva}]{yona2024can}
Gal Yona, Roee Aharoni, and Mor Geva. 2024.
\newblock Can large language models faithfully express their intrinsic uncertainty in words?

\bibitem[{Yuan and He(2024)}]{safety}
Tongxin Yuan and Zhiwei He. 2024.
\newblock R-judge: Benchmarking safety risk awareness for llm agents.
\newblock \emph{arXiv preprint arXiv:2401.10019}.

\bibitem[{Zeng et~al.(2024)Zeng, Chen, Liu, Jiang, and Jia}]{zeng2024mrgsm8kmetareasoningbenchmarklarge}
Zhongshen Zeng, Pengguang Chen, Shu Liu, Haiyun Jiang, and Jiaya Jia. 2024.
\newblock \href {http://arxiv.org/abs/2312.17080} {Mr-gsm8k: A meta-reasoning benchmark for large language model evaluation}.

\bibitem[{Zhang et~al.(2020)Zhang, Kishore, Wu, Weinberger, and Artzi}]{BERTScore}
Tianyi Zhang, Varsha Kishore, Felix Wu, Kilian~Q. Weinberger, and Yoav Artzi. 2020.
\newblock \href {http://arxiv.org/abs/1904.09675} {Bertscore: Evaluating text generation with bert}.

\bibitem[{Zheng and Chiang(2023)}]{qa}
Lianmin Zheng and Wei-Lin Chiang. 2023.
\newblock Judging llm-as-a-judge with mt-bench and chatbot arena.
\newblock \emph{Advances in neural information processing systems}, 36:46595--46623.

\end{thebibliography}

\clearpage

\end{document}